\newcommand{\database}{\mathcal{D}}
\newcommand{\tbox}{\mathcal{T}}
\newcommand{\mappings}{\mathcal{M}}
\newcommand{\schema}{\mathcal{S}}
\newcommand{\eval}{\textup{eval}}
\newcommand{\naf}{\ensuremath\textup{\bfseries not}\,}
\newcommand{\dataquery}{Q^\mathcal{\schema}}
\newcommand{\ontquery}{H^\mathcal{\tbox}}
\newcommand{\negjustification}{J^-}
\newcommand{\posjustification}{J^+}
\newcommand{\varx}{\mathbf{x}}
\newcommand{\vary}{\mathbf{y}}
\newcommand{\varz}{\mathbf{z}}
\newcommand{\vart}{\mathbf t}
\newcommand{\abox}{\mathcal{A}}
\newcommand{\tmodel}{\mathcal{I}} 
\newcommand{\obda}{(\database,\mappings,\tbox)}
\theoremstyle{remark}
\newtheorem{example}{Example}
\newtheorem*{remark}{Remark}
\theoremstyle{plain}
\newtheorem{lemma}{Lemma}
\newtheorem{theorem}{Theorem}
\newtheorem{corollary}{Corollary}
\newenvironment{proofsketch}{\proof}{\endproof}
\theoremstyle{definition}
\newtheorem{definition}{Definition}
\title{Mapping Data to Ontologies with Exceptions Using Answer Set Programming}
\author{Daniel P. Lupp \and Evgenij Thorstensen \\ University of Oslo, Norway\\ \{danielup, evgenit\}@ifi.uio.no}
\begin{document}
 \maketitle 
\begin{abstract}
In ontology-based data access, databases are connected to an ontology
via mappings from queries over the database to queries over the
ontology. In this paper, we consider mappings from relational
databases to first-order ontologies, and define an ASP-based
framework for GLAV mappings with queries over the
ontology in the mapping rule bodies. We show that this type of mappings can be used to express constraints
and exceptions, as well as being a powerful mechanism for succinctly representing OBDA mappings.
We give an algorithm for brave reasoning in this setting, and show that this problem has either the same data complexity as ASP (NP-complete), or it is at least as hard as the complexity of checking entailment for the ontology queries. Furthermore, we show that for ontologies with UCQ-rewritable queries there exists a natural reduction from mapping programs to $\exists$-ASP, an extension of ASP with existential variables that itself admits a natural reduction to ASP.
\end{abstract}

\section{Introduction}

Ontology-based data access (OBDA) \cite{poggi2008} is a method for data integration, utilizing a semantic layer consisting of an ontology and a set of mappings on top of a database.  An \emph{ontology} is a machine-readable
model designed to faithfully represent knowledge of a domain independently of the structure of the database; it is comprised
of concepts and relationships between these concepts. These ontologies are often formulated using \emph{description logics} (DLs), a class of decidable logics, due to their desirable computational properties \cite{DBLP:journals/jar/CalvaneseGLLR07}.

With the help of mappings, users' queries over the ontology are rewritten into a query over the database language, such as SQL, which can then be run on the source data. This query rewriting consists of two stages: Firstly, the ontology query is rewritten to an equivalent query which takes ontological knowledge into account. Secondly, the mappings are used to translate this query into the source query language. To ensure that this rewriting is always possible, one requires the ontology to be \emph{first-order rewritable} (FOL-rewritable); that is, that every rewritten query is equivalent to a first-order formula. However, not all description logics have this property. A common class of ontology languages used in OBDA is the  DL-lite family. These description logics have been tailored towards FOL-rewritability and tractable query answering, making them ideally suited for OBDA \cite{DBLP:journals/jar/CalvaneseGLLR07}.

Unfortuantely, the rewriting step can cause a worst-case exponential blow-up in query size \cite{DBLP:journals/jar/CalvaneseGLLR07}. While this blow-up is necessary to ensure complete query answering, it can lead to highly redundant database queries, where the same data is accessed multiple times. Furthermore, mapping design and maintenance is usually manual work \cite{cbb1fbd58bef414bb51064327f881fc2}. This can be a very laborious task, and recent work on mapping evolution and repair \cite{DBLP:conf/dlog/LemboRSST16} attempt to alleviate some of the difficulties involved. However, currently OBDA mappings are interpreted as first-order implications. As a consequence, they lack the expressivity to efficiently handle these issues: exceptions must be stated explicitly, possibly in multiple mapping assertions. Furthermore, pruning redundant queries without nonmonotonic features such as extensional constraints \cite{raey} or closed predicates \cite{Lutz:2013:ODA:2540128.2540276} is practically infeasible. 

Current research on extending OBDA with nonmonontonic capabilities has focused on the ontology side, e.g., through modal description logics or by inclusion of closed predicates \cite{Donini:2002:DLM:505372.505373,Lutz:2013:ODA:2540128.2540276}. However the modal semantics can be quite unintuitive. In this setting, modal ontology axioms do not behave well with nonmodal axioms. Furthermore, research into extending ontologies with closed predicates quickly results in intractability \cite{Lutz:2013:ODA:2540128.2540276} .

There have also been several approaches to combining rule-based formalisms and description logic ontologies, be it by constructing a hybrid framework integrating both rules and ontology axioms into the same semantics \cite{Motik:2010:RDL:1754399.1754403} or by adding rules ``on top'' \cite{Eiter20081495} of ontologies. Here, the two formalisms retain different semantics, allowing, however, for interaction between rules and ontologies by including special predicates in the rule bodies. 

In this paper, we propose a new framework for OBDA mappings, called \emph{mapping programs}, based on stable model semantics, where mappings are not interpreted as first-order implications. Instead, mappings are rules containing (positive and negative) ontology queries in their bodies, allowing for existential quantification in the body and head of a rule. Each mapping rule contains a database query acting as a guard on the rule, hence existential witnesses generated by mapping rules are not further propagated by the mapping program. This is in contrast to the more general existential rules frameworks of tuple-generating dependencies \cite{DBLP:journals/ws/CaliGL12,DBLP:journals/jair/CaliGK13}, where existentials in heads of rules may propagate. The decidability of mapping program reasoning therefore reduces entirely to decidability of ontology reasoning.

This formalism allows expressing epistemic constraints on the database, such as extensional constraints \cite{raey}. Furthermore, by being able to express default rules, mapping programs serve as a powerful abbreviation tool for mapping maintenance. This allows for adding nonmonotonic features to OBDA while retaining the desirable complexity of ontology reasoning. Mapping programs are a natural extension of $\exists$-ASP \cite{DBLP:conf/ijcai/GarreauGLS15}, an extension of standard answer set programming (ASP) with existential quantifiers in the heads and negative bodies of rules.

In the following, we define and analyze the general mapping program framework, discussing reasoning complexity (NP$^\mathcal{O}$-complete, where $\mathcal{O}$ is an ontology reasoning oracle).  We also consider a special case where the body ontology queries are UCQ-rewritable with respect to the ontology. In this setting, mapping programs can be equivalently reduced to classical ASP. Thus, efficient ASP solvers can be employed for query answering over mapping programs.

\section{Preliminaries}
\subsection{OBDA Mappings}
Let $\Sigma_\tbox$ and $\Sigma_\schema$ be disjoint signatures containing \emph{ontology predicate symbols}, and \emph{source predicate symbols} respectively. Furthermore, let $\mathcal{C}$ be a set of constants. Then a \emph{source schema} $\schema$ is a relational schema containing relational predicates in $\Sigma_\schema$ as well as integrity constraints. A \emph{legal database instance} $\database$ over $\schema$ is a set of ground atoms from $\Sigma_\schema$ and $\mathcal{C}$ that satisfies all integrity constraints in $\schema$. A first-order formula with free variables is called a \emph{query}, if it has no free variables it is called a \emph{boolean query}. An \emph{ontology} $\tbox$ is a set of first-order formulas over $\Sigma_\tbox$. In practice, description logics are often used to express ontologies. Thus, though the results in this paper focus on the general case of FOL ontologies, we will use common DL notation throughout the examples in this paper for notational convenience \cite{DBLP:journals/jar/CalvaneseGLLR07}.

\begin{example}
 The ontology axiom $Boss \sqsubseteq \exists hasSup^-$ is equivalent to the first-order formula $\forall x (Boss(x)\rightarrow \exists y. hasSup(y,x))$. Here, $hasSup^-$ refers to the \emph{inverse role} of $hasSup$.
\end{example}

Following \cite{mappings2}, an OBDA specification is a tuple $\obda$ consisting of a database instance $\database$ legal over a schema $\schema$, a FOL-rewritable ontology $\tbox$ and a set $\mappings$ consisting of \emph{mapping assertions} of the form $m:\varphi \leadsto \psi$, where $\varphi$ and $\psi$ are queries over the data source and ontology, respectively. Then a model $\mathcal{I}$ of an OBDA specification $\obda$ is a first-order model over $\Sigma_\tbox\cup\Sigma_\schema \cup \mathcal{C}$ that satisfies both $\tbox$ and $\mappings$. Here we say that a first-order model $\tmodel$ satisfies a mapping $\mappings$ if $\tmodel \vDash \psi(\vart)$ for every mapping assertion $m:\varphi \leadsto \psi$ and every tuple $\vart\in\textup{eval}(\varphi,\database)$.

\begin{example}\label{ex-shortcomings}
  Consider a database consisting of precisely one two-column table $\texttt{JOBS\_DB(<NAME>,<JOB>)}$. Furthermore, consider the following ontology:
\begin{align*}
  Empl &\sqsubseteq Person\\
  Boss &\sqsubseteq Person
\end{align*}
Suppose that we simply wish to query for all instances of $Person$ in the database.
In the rewriting process, the query $Person(x)$ would be rewritten to 
\begin{align*}
Person(x)&\sqcup Empl(x)\sqcup Boss(x)
\end{align*}
while in the unfolding step, each of the above disjuncts would be expanded to a database query using the mapping assertions. For example, if there exist two mapping assertions $\texttt{JOBS\_DB}(x, ``Accountant") \leadsto Empl(x)$ and $\texttt{JOBS\_DB}(x,``IT")\leadsto Empl(x)$, then the disjunct $Empl(x)$ would be unfolded as $\texttt{JOBS\_DB}(x,``IT")\lor\texttt{JOBS\_DB}(x, ``Accountant")$.
\end{example}
Example~\ref{ex-shortcomings} demonstrates some of the current shortcomings of OBDA: due to its inherent, first-order nature, it is impossible to distinguish between inferred knowledge and knowledge that is explicit in the database. In the above example,  in the presence of a mapping assertion $\texttt{JOBS\_DB}(x,y) \leadsto Person(x)$ the query $Person(x)$ would have sufficed without any ontology rewriting, since all desired information was contained in one table. However, while some OBDA implementations \cite{iswc2015tuning} support manual query pruning, i.e., the user is able to decide which concepts should not be rewritten, this can potentially lead to incomplete query answering, and there is currently no way of formally checking whether it does. Thus, to ensure complete query answering we have a (potentially redundant) worst-case exponential blow-up in query size.
 
Another issue with the current aproach is how exceptions and a lack of information are dealt with. Currently, one must keep track of exceptions manually by explicitly listing all exceptions to a rule. Furthermore, due to the closed-world assumption (CWA) in the database, a lack of knowledge is interpreted as knowledge itself, e.g., if something is not contained in the $\texttt{JOBS\_DB}$ table, it is not a $Person$.

\subsection{Answer Set Programming}

Answer set programming (ASP) is a declarative programming paradigm based on the stable model semantics first defined in \cite{gelfond-lifschitz88} as a means of handling default negation in a straightforward manner. It has become one of the more popular logic programming paradigms, due to, e.g., computational benefits such as guaranteed termination as compared to resolution in Prolog \cite{Lifschitz08whatis}.

An \emph{ASP-program} $P$ s a set of rules of the form

\[
H \leftarrow B_1,\ldots,B_m,\naf C_1,\ldots,\naf C_n.
\]
with ground atoms $H,B_i,$ and $C_j$. The \emph{head} of a rule $r$ is $Head(r)=H$ and the \emph{body} consists of a two parts, the negative body $body^-(r)=\{C_1,\ldots C_n\}$ and the positive body $body^+(r)=\{B_1,\ldots,B_m\}$.  The Herbrand base $HB_P$ of a program $P$ is the set of all possible ground atoms using predicate symbols, function symbols and constants occuring in $P$. Then for a subset $I\subseteq HB_P$, the Gelfond-Lifschitz reduct $P^I$ of $P$ is the set of rules in $P$ after applying the following changes:
\begin{enumerate}
\item If $C_i\in I$ for some $i$, remove the rule (this corresponds to rules that cannot be applied)
\item In all remaining rules, remove the negative clauses $\naf C_i$ (this corresponds to removing all negative clauses that will evaluate to true)
\end{enumerate}
This reduct is a positive program, i.e., a program without any occurence of negation-as-failure. An interpretation $I\subseteq HB_P$ is called a \emph{stable model} or an \emph{answer set} of $P$ if it is a $\subseteq$-minimal model of $P^I$, i.e., it is $\subseteq$-minimal and satisfies all rules in $P^I$.

Though the above semantics require ground atoms, i.e., are essentially propositional, ASP programs might also contains variables or function symbols. In this general case where function symbols are allowed, reasoning becomes undecidable \cite{Alviano11functionsymbols}. In the function-free case, the first-order ASP programs are usually first grounded to reduce it to the propositional case. The grounded programs can then either be solved directly \cite{Gebser:2012:CAS:2228640.2228952} or, e.g., translated to SAT before being passed on to efficient SAT solvers \cite{Lin2004115,gomes2008satisfiability}.
\subsection{$\exists$-ASP}
$\exists$-ASP is an extension to answer set programming proposed by \cite{DBLP:conf/ijcai/GarreauGLS15} to include existential quantification in both the heads and negative bodies of rules. These existential variables are dealt with by Skolemizing and treating the newly introduced function symbols as constants. Specifically, an $\exists$-rule is a rule of the form
\begin{align*}
H_1,\ldots,H_n \leftarrow &B_1,\ldots,B_m,\\&\naf (C^1_1,\ldots,C^1_{u_1}),\ldots,\naf (C^s_1,\ldots,C^s_{u_s}).
\end{align*}
where all $H_i,B_j,C^k_l$ are atoms. Then, all variables not occuring in the positive body of a rule are interpreted existentially. Thus, the \emph{Skolem program} $sk(P)$ of an $\exists$-program $P$ is defined as the set of rules obtained from $P$ by replacing each existential variable in the head of a rule by a Skolem symbol.

Similar to standard ASP, this program is then grounded; one must, however, be careful with existential variables in the negative bodies, as the complete grounding is not equivalent to the nonground rules.
To address this, \cite{DBLP:conf/ijcai/GarreauGLS15} introduce the concept of \emph{partial grounding}, which grounds all variables except the existentials in the negative body. The \emph{reduct} is then defined analogously to the standard ASP case: Given a set of ground atoms $X\subseteq HB_{sk(P)}$, first remove all rules containing a negative body atom which is entailed by $X$. Finally, remove all remaining negative body atoms.

An $\exists$-answer is then defined in the usual way, now also allowing for Skolem symbols in place of constants: a set $X\subseteq HB_{sk(P)}$ is called an \emph{$\exists$-answer set} iff it is a $\subseteq$ minimal model of the reduct $P^X$.

By adding a set $R$ of auxiliary predicates to the signature of an $\exists$-ASP program $P$, one is able to rewrite $P$ into a classical ASP program $P'$ such that they are equivalent with respect to answer sets. In particular, for an $\exists$-answer set $X$ of $P$ there exists some set $A$ of ground atoms over predicates in $R$ such that $X\cup A$ is a classical answer set of $P'$. Furthermore, from an answer set $Y$ of $P'$ one can construct an $\exists$-answer set of $P$ by removing any ground atoms over predicates in $R$ occuring in $Y$ (for more details on the rewriting, see Proposition~8 and the preceding discussion in \cite{DBLP:conf/ijcai/GarreauGLS15}). Therefore, reasoning in $\exists$-ASP can be reduced to reasoning in classical ASP.


\section{OBDA Mapping Programs}
In this section we introduce the syntax and semantics for a new framework for OBDA mappings called \emph{mapping programs}. These programs consist of rules that, intuitively, map database queries $\dataquery$ to ontology queries $\ontquery$ provided that certain conditions $\posjustification$ and $\negjustification$ are met. Thus, mapping programs extend classical OBDA mappings with default reasoning.
 \subsection{Syntax}

   A \emph{mapping rule} is a rule of the form
 \begin{align*}
 \ontquery(\varx,\varz)\leftarrow &\naf \negjustification_1(\vary_1),\ldots,\naf \negjustification_k(\vary_k),\\
&\posjustification_1(\vary'_1),\ldots,\posjustification_l(\vary'_l),\dataquery(\varx).
 \end{align*}
 where $\vary_i,\vary'_j \subseteq \varx$ for all $i,j$. Here, the \emph{head} $\ontquery(\varx,\varz)$ is a first-order formula over $\Sigma_\tbox$ where $\varz$ denotes possible existential variables. The \emph{body} of a mapping rule consists of $\negjustification_i,\posjustification_j$, respectively called the \emph{negative} and \emph{positive justifications} and the \emph{source query} $\dataquery$. Here, $\negjustification_i$ and $\posjustification_j$ are first-order formulas over the language of $\tbox$, and the \emph{source query} $\dataquery$ is a first-order formula over $\Sigma_\schema$. A set $\mappings$ of mapping rules is called a \emph{mapping program}.
\begin{example}\label{ex-syntax}
Consider a database consisting of one table $\texttt {Jobs\_DB(<NAME>,<JOB>)}$. Let $\Sigma_\tbox=\{Empl, hasSup,depHeadOf\}$ with a unary relation $Empl$ of employees and two binary relations $hasSup$ and $depHeadOf$, describing a supervising relation and a department head relation, respectively. The default rule ``employees, of whom we do not know that they are the head of a department, have a supervisor'' can be expressed through the following mapping:
\begin{align*}
m_1:\exists Z.hasSup(X,Z)\leftarrow &\naf \exists Y. depHeadOf(X,Y), \\&Empl(X),\texttt{Jobs\_DB}(X,P).
\end{align*}
\end{example}

Then a \emph{generalized OBDA specification} is a triple $\obda$, where $\database$ is a database instance legal over a source schema $\schema$, $\mappings$ is a mapping program, and $\tbox$ is an ontology.

\subsection{Semantics}\label{sec-abox}

\begin{definition}[Skolem program, following \cite{DBLP:conf/ijcai/GarreauGLS15}]
  Let $\mappings$ be a mapping program. The \emph{Skolem rule} $sk(m)$ associated to a rule $m\in\mappings$ is obtained by replacing each existential variable $v$ in $Head(m)$ by a new Skolem function symbol $sk_v(s)$, where $s$ is an ordered sequence of universal variables in $Head(m)$ . Then the \emph{Skolem program} of $\mappings$ is $sk(\mappings)=\{ sk(m)\mid m\in \mappings\}$.
\end{definition}
 A \emph{mapping interpretation} $\abox$ is a consistent subset of $HB_{sk(\mappings)}$, the Herbrand base over the Skolem program $sk(\mappings)$. Such an interpretation is said to \emph{satisfy} or \emph{model} a positive Skolemized mapping rule 
 \begin{align*}
m: \ontquery(\varx,sk_\varz(\varx))\leftarrow\posjustification_1(\vary'_1),\ldots,\posjustification_l(\vary'_l),\dataquery(\varx).
 \end{align*}
 
written $\abox\vDash m$, if it satisfies the head or does not satisfy the body. It satisfies the body of a rule $m$ if the following holds: for every tuple $\vart\in\eval(\dataquery, \database)$, every interpretation $I$ with $I\vDash \tbox\cup \abox$ satisfies $\posjustification_j[\vart]$ for all $j\leq l$. Here, $\eval(\dataquery, \database)$ denotes the set of tuples $\vart$ that are answers to the query $\dataquery$ over $\database$.
\begin{remark}
  In this framework, the database query $\dataquery$ acts as a guard on the mapping rule $m$. It is in general a first-order query. Since $\dataquery$ is interpreted solely over $\database$, mapping rules are not applicable to existential witnesses generated by mapping rule heads. In particular, the database query $\top(\varx)$ is a shorthand for every tuple $\varx$ occuring in the database.
\end{remark}
For notational brevity, we slightly abuse notation in the following, writing $\mappings$ instead of $sk(\mappings)$. Indeed, in the following we shall only consider the Skolemized mapping program. 
 
An interpretation $\abox$ is said to \emph{satisfy} or \emph{model} a positive mapping program $\mappings$, written $\abox\vDash \mappings$, if it satisfies all mapping rules contained in $\mappings$.

\begin{example}\label{ex-skolem}
  Consider the mapping from Example~\ref{ex-syntax}. By Skolemizing, we get the mapping program:
\begin{align*}
hasSup(X,sk_z(X))\leftarrow &\naf \exists Y. depHeadOf(X,Y),\\ &Empl(X), \texttt{Jobs\_DB}(X,P).
\end{align*}
\end{example}

\begin{definition}[Partial ground program, following \cite{DBLP:conf/ijcai/GarreauGLS15}] The \emph{partial grounding} $PG(m)$ of a mapping rule $m$ is the set of all partial ground instances of $m$ over constants in $\Sigma_\database$ for those variables that are not existential variables in the negative justifications. The \emph{partial ground program} of a mapping program $\mappings$ is the set $PG(\mappings)=\bigcup_{m\in\mappings}PG(m)$.
\end{definition}
\begin{example}\label{ex-grounding}
  Consider the database and mapping from Examples~\ref{ex-syntax} and \ref{ex-skolem}. If the set of constants occuring in the database is $\{a,b\}$, then $PG(sk(m_1))$ consists of the four mapping rules
\begin{align*}
 hasSup(u,sk_z(u))\leftarrow &\naf \exists Y. depHeadOf(u,Y),\\&Empl(u), \texttt{Jobs\_DB}(u,v).
\end{align*}
for $u,v\in \{a,b\}$.
\end{example}
\begin{definition}[$\tbox$-reduct]\label{def-reduct}
Given an ontology $\tbox$, define the $\tbox$-reduct $PG(\mappings)^\abox$ of a partial ground mapping program $PG(\mappings)$ with respect to an interpretation $\abox$ as the mapping program obtained from $PG(\mappings)$ after applying the following:
\begin{enumerate}
\item Remove all mapping rules $m$ where there exists some $i\leq k$ such that $\tbox\cup\abox\vDash  \negjustification_i$.
\item Remove all negative justifications from the remaining rules.
\end{enumerate}
\end{definition}
\begin{example}\label{ex-reduct}
Continuing with our running example, let $\tbox=\{Boss\sqsubseteq \exists depHeadOf , Boss\sqsubseteq \exists hasSup^-
\}$. Furthermore, add the mapping rules
 \begin{align*}
 m_2:Boss(X)&\leftarrow \texttt{Jobs\_DB}(X,b). \\ m_3:Empl(X)&\leftarrow \texttt{Jobs\_DB}(X,P).
 \end{align*}
Then for 
  $\abox =\{\texttt{Jobs\_DB}(a,b), Empl(a), Boss(a)\}$, the rules 
\begin{align*}
  hasSup(a,sk_z(a))\leftarrow &\naf \exists Y. depHeadOf(a,Y),\\&Empl(a), \texttt{Jobs\_DB}(a,v).
\end{align*}
for $v\in \{a,b\}$ are removed in the $\tbox$-reduct $PG(\mappings)^\abox$ construction, since $\tbox\cup\abox \vDash \exists Y. depHeadOf(a,Y)$. Then the $\tbox$-reduct w.r.t. $\abox$ consists of all groundings of the following rules:
\begin{align*}
 hasSup(b,sk_z(b))&\leftarrow Empl(b),\texttt{Jobs\_DB} (b,Y).\\
Boss(X)&\leftarrow \texttt{Jobs\_DB}(X,b).\\
Empl(X)&\leftarrow \texttt{Jobs\_DB}(X,P).
\end{align*}
\end{example}

A mapping interpretation $\abox$ is called a \emph{$\tbox$-answer set} of $\mappings$ if it is a $\subseteq$-minimal model of the $\tbox$-reduct $PG(\mappings)^\abox$.

Then a tuple $(\tmodel,\abox)$ consisting of a first-order model $\tmodel$ and a mapping interpretation $\abox$ is a \emph{model of an generalized OBDA specification $\obda$} if
\begin{enumerate}
\item $\tmodel\vDash \tbox\cup\abox$,
\item $\abox$ is a $\tbox$-answer set of $\mappings$.\label{point2}
\end{enumerate}

For a given ontology $\tbox$, a mapping program $\mappings$ is said to \emph{entail a formula} $\varphi$, written $\mappings \vDash_\tbox \varphi$, if every $\tbox$-answer set of $\mappings$ entails $\varphi$. %

Similarly, a generalized OBDA specification $\obda$ entails a formula $\varphi$, written $\obda \vDash \varphi$, if every model of $\obda$ entails $\varphi$.
\begin{example}
It is easily verifiable that the set $\abox$ given in Example~\ref{ex-reduct} is in fact a $\tbox$-answer set. It does not, however, entail $\tbox$, as the ontology axiom $Boss\sqsubseteq \exists hasSup^-$ is not satisfied. Thus, to obtain a model of the generalized OBDA specification, any model $\tmodel$ must satisfy this axiom, in addition to the assertions in $\abox$.
\end{example}
\begin{remark}[Extensional constraints]
 Mapping programs are capable of expressing extensional constraints over the OBDA specification, i.e., constraints over the ontology language on the database and mappings. For instance, the extensional constraint $C \sqsubseteq_e D$, which in classical OBDA can be intuitively read as ``if $C(a)$ is contained in the ABox, then $D(a)$ is contained in the ABox as well.'' Such a constraints is expressible with the mapping $\bot \leftarrow \naf D(X), C(X), \top(X)$, where $\bot$ is \emph{bottom} and $\top$ is the query \emph{top} of appropriate arity. This guarantees that any $\exists$-answer set of $\mappings$ must satisfy this constraint. It is worth noting that, while this is similar to integrity constraints over the database, it is not entirely the same: the database schema might differ greatly from the structure of the ontology, thus allowing the possibility of describing database constraints on an ontology level.
\end{remark}
\subsection{Complexity Results}
In the general case, where the heads and bodies of mapping rules are allowed to contain arbitrary first-order formulas, reasoning over mapping programs is obviously undecidable. Indeed, consider an empty $\tbox$ and the mapping program $\mappings =\{R(a)\leftarrow \top,H(x)\leftarrow \varphi, R(x)\}$ for some arbitrary first-order formula $\varphi$. Then $\mappings \vDash H(a)$ if and only if $\varphi$ is a tautology, which is known to be undecidable for arbitrary first-order $\varphi$. This is summarized in the following theorem.
\begin{theorem}
The problem of checking $\mappings \vDash A$ for a given mapping program $\mappings$ and a ground atom $A$ is undecidable.
\end{theorem}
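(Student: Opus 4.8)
The plan is to reduce from the validity (tautology) problem for first-order logic, which is undecidable by the Church--Turing theorem. Given an arbitrary first-order sentence $\varphi$, I would construct in a computable way a mapping program $\mappings$ and a ground atom $A$ such that $\mappings \vDash_\tbox A$ holds precisely when $\varphi$ is valid. Since the general entailment problem includes the fixed instance $\tbox=\emptyset$, undecidability for that single ontology already yields the theorem.

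Concretely, I would fix $\tbox=\emptyset$, take a source schema whose database $\database$ contains a single constant $a$ (so that $\eval(\top,\database)=\{a\}$ by the convention noted in the Remark), and choose a fresh unary ontology predicate $H$ not occurring in $\varphi$. Let $\mappings$ consist of the single rule $H(x)\leftarrow \varphi,\top(x)$, in which $\varphi$ is used as a boolean positive justification $\posjustification$ and $\top(x)$ is the source guard, and set $A=H(a)$. This is essentially the reduction sketched in the paragraph preceding the theorem, with the auxiliary predicate $R$ elided.

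The heart of the argument is the observation that, since the rule has no negative justifications, the $\tbox$-reduct of $PG(\mappings)$ is just the single ground rule $H(a)\leftarrow\varphi$, and by the definition of body satisfaction this rule fires under an interpretation $\abox$ exactly when $\tbox\cup\abox\vDash\varphi$, i.e.\ (as $\tbox=\emptyset$) exactly when $\abox\vDash\varphi$. I would then split into two cases. If $\varphi$ is valid, then $\abox\vDash\varphi$ for every $\abox$, so the body always fires; the unique $\subseteq$-minimal model of the reduct is $\{H(a)\}$, hence every $\tbox$-answer set entails $A$ and $\mappings\vDash_\tbox A$. If $\varphi$ is not valid, then because $H$ does not occur in $\varphi$ we have $\emptyset\not\vDash\varphi$, so the body does not fire; consequently $\abox=\emptyset$ is a (trivially $\subseteq$-minimal) model of the reduct, and this $\tbox$-answer set does not entail $H(a)$, giving $\mappings\not\vDash_\tbox A$. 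Combining both directions yields $\mappings\vDash_\tbox A$ iff $\varphi$ is valid, and undecidability of first-order validity transfers to the entailment problem.

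The one genuinely delicate point, and where I would be most careful, is the semantics of the positive justification: firing is governed by ontology entailment $\tbox\cup\abox\vDash\posjustification$ rather than by plain membership in $\abox$. I must therefore verify that the freshness of $H$ decouples this entailment from the contents of $\abox$ and reduces it exactly to validity of $\varphi$, and that the candidate interpretations $\{H(a)\}$ and $\emptyset$ are indeed $\subseteq$-minimal models of the reduct in the respective cases. Everything else is routine bookkeeping, and the construction is clearly effective, so no decision procedure for $\mappings\vDash A$ can exist.
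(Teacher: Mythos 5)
Your proposal is correct and follows essentially the same route as the paper: a reduction from first-order validity in which an arbitrary sentence $\varphi$ is placed as a positive justification guarding the derivation of a fresh ground atom $H(a)$, so that $\mappings \vDash H(a)$ iff $\varphi$ is valid. Your version merely streamlines the paper's construction by eliding the auxiliary predicate $R$ (using the database constant and the $\top(x)$ guard instead), and your case analysis of the answer sets $\{H(a)\}$ and $\emptyset$ fills in the details the paper leaves implicit.
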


\begin{corollary}
  Let $\obda$ be a generalized OBDA specification and $A$ be a ground atom. Then $\obda\vDash A$ is undecidable.
\end{corollary}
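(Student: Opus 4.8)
The plan is to transfer undecidability from the preceding theorem by instantiating the ontology with the empty theory, so that the first-order component of an OBDA model imposes no constraints beyond the answer set itself. Concretely, let $\mappings$ and $A$ be the mapping program and ground atom witnessing undecidability in the theorem; its construction fixes $\tbox=\emptyset$ together with a database $\database$ legal over some schema $\schema$ (needed so that $\eval(\dataquery,\database)$, and hence the $\tbox$-answer sets, are defined). I would assemble exactly these three components into the generalized OBDA specification $\obda$ and show that $\obda\vDash A$ holds if and only if $\mappings\vDash_\tbox A$ holds, from which the claim follows since a decision procedure for the former would decide the latter.

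The heart of the argument is this equivalence, which I would establish by unwinding both definitions. A model of $\obda$ is a pair $(\tmodel,\abox)$ in which $\abox$ is a $\tbox$-answer set of $\mappings$ and $\tmodel\vDash\tbox\cup\abox$, and $\obda\vDash A$ asserts $\tmodel\vDash A$ for every such pair. Since $\tbox=\emptyset$, the first-order models of $\tbox\cup\abox$ are exactly the models of $\abox$, so the condition ``$\tmodel\vDash A$ for all $\tmodel\vDash\abox$'' is precisely the entailment $\abox\vDash A$. On the other side, $\mappings\vDash_\tbox A$ is by definition the assertion that every $\tbox$-answer set $\abox$ entails $A$, which for the empty ontology is again $\abox\vDash A$. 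Quantifying over the same collection of $\tbox$-answer sets on both sides therefore yields $\obda\vDash A\iff\mappings\vDash_\tbox A$, and the undecidability of the right-hand side transfers directly.

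The single point requiring care is the mismatch in quantification between the two notions of entailment: $\obda\vDash A$ ranges over pairs $(\tmodel,\abox)$, so a $\tbox$-answer set $\abox$ for which $\tbox\cup\abox$ admits no first-order model would silently drop out of the OBDA-level quantification, whereas $\mappings\vDash_\tbox A$ still counts it. This is exactly where the choice $\tbox=\emptyset$ does the work: every $\tbox$-answer set $\abox$ is a consistent subset of $HB_{sk(\mappings)}$, hence a consistent set of ground atoms, and so admits its Herbrand model. Consequently each answer set participates in at least one model of $\obda$, the two quantifications range over the same answer sets, and the equivalence — and with it the undecidability of $\obda\vDash A$ — is secured. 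I expect verifying this non-vacuity to be the only nontrivial step; the rest is a routine unwinding of the semantics.
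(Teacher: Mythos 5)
Your proof is correct and follows exactly the route the paper intends: the paper states this corollary without proof as an immediate lifting of the preceding theorem, whose construction already uses $\tbox=\emptyset$, so that $\obda\vDash A$ coincides with $\mappings\vDash_\emptyset A$. Your careful treatment of the quantification mismatch (non-vacuity of the model-level quantifier, secured because every mapping interpretation is a consistent set of ground atoms and thus has a Herbrand model) is a detail the paper glosses over, but it is the right justification and does not change the approach.
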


Now let $(\tbox, \mathcal{L})$ be a pair consisting of an ontology $\tbox$ and a set $\mathcal{L}$ of formulas over the signature $\Sigma_\tbox$ such that $\tbox$-entailment of any $\varphi\in \mathcal{L}$ is decided by an oracle $\mathcal{O}_{(\tbox,\mathcal{L})}$. In the following we consider mapping programs $\mappings$ where the heads and justifications in rules contain formulas from $\mathcal{L}$. Then to construct a $\tbox$-answer set, we can employ a simple guess-and-check algorithm using the verifier given in Algorithm~\ref{alg-general}.

\begin{algorithm}[h]
\caption{$\tbox$-answer set verifier} 
\label{alg-general}
\algblock[name]{Start}{End}
 \begin{algorithmic}
   \State {\bfseries input} ontology $\tbox$, partially ground Skolem program $\mappings$, set $\abox\subseteq HB_{sk(\mappings)}$:
   \Start
   \State $\mappings^\abox:=$ \Call{make-reduct}{$\abox,\mappings$};
   \If {\Call{check-sat}{$\abox,\mappings^\abox$} and \Call{check-min}{$\abox,\mappings^\abox$}}
   \State \Return true;
   \EndIf
   \State \Return false;
   \End

   \State 
   
  \Procedure{make-reduct}{$\abox,\mappings$}
  \State $\mappings^\abox:= \mappings$;
  \ForAll {$m\in \mappings$}
  \If{$\tbox\cup\abox\vDash\negjustification_i$ for some $i$}
  \State  $\mappings^\abox:=\mappings^\abox\setminus \{m\}$;
  \EndIf
  \EndFor
  \State remove negative clauses from $\mappings^\abox$;
  \State \Return $\mappings^\abox$;
  \EndProcedure

  \State
  
  \Procedure{check-sat}{$\abox,\mappings^\abox$}
  \ForAll {$m'\in\mappings^\abox$}
  \If {$\tbox,\abox\vDash Body(m')$ and $\tbox,\abox\not\vDash Head(m')$}
  \State \Return false; 
  \EndIf
  \EndFor
  \State\Return true;
  \EndProcedure

  \State
  
  \Procedure{check-min}{$\abox,\mappings^\abox$}
  \ForAll {$a\in \abox$}
  \If {\Call{check-sat}{$A\setminus\{a\},\mappings^\abox$}}
  \State \Return false;
  \EndIf
  \EndFor
  \State \Return true;
  \EndProcedure
\end{algorithmic}
\end{algorithm}

Correctness of Algorithm~\ref{alg-general} is obvious: by definition, a set $\abox$ is a $\tbox$-answer set of $\mappings$ if and only if it is a $\subseteq$-minimal model of the $\tbox$-reduct $\mappings^\abox$. Both the construction of $\mappings^\abox$ and the satisfiability-checking are done following the respective definitions. For $\subseteq$-minimality, it is sufficient to check co-satisfiability of $\abox\setminus \{a\}$ for every $a\in \abox$, since $\mappings^\abox$ is a positive program and hence monotonic.

The complexity of Algorithm~\ref{alg-general} depends the complexity of the oracle $\mathcal{O}_{(\tbox,\mathcal{L})}$ and the following factors. Let 

\begin{enumerate}
\item $n^+(\mathcal{N})$ (resp. $n^-(\mathcal{N})$) denote the number of positive (resp. negative) justifications in a mapping program $\mathcal{N}$, and
\item $h(\mathcal{N})$ denote the number of heads in a mapping program $\mathcal{N}$.
\end{enumerate}

Furthermore, let $|\mathcal{O}_{(\tbox, \mathcal{L})}|$ denote the complexity of the oracle $\mathcal{O}_{(\tbox,\mathcal{L})}$. Then each of the three procedures ({\sc make-reduct, check-sat, check-min}) in Algorithm~\ref{alg-general} have the following complexity:

\begin{enumerate}
\item {\sc make-reduct}: the oracle $\mathcal{O}_{(\tbox,\mathcal{L})}$ is called on each negative justification in $\mappings$, so complexity of this procedure is $n^-(\mappings)\cdot |\mathcal{O}_{(\tbox,\mathcal{L})}|$.
\item {\sc check-sat}: the procedure evaluates to true if, for all mapping rules $m$, $\tbox,\abox\vDash Head(m)$ or $\tbox,\abox\not\vDash Body(m)$. Thus, the oracle co-$\mathcal{O}_{(\tbox,\mathcal{L})}$ must be called on all positive justifications in $\mappings^\abox$. If all justifications in a rule are entailed (i.e., the co-oracle evaluates to false), the entailment of the rule heads must be checked. Hence, the complexity is bounded by $n^+(\mappings^\abox)\cdot|\textup{co-}\mathcal{O}_{(\tbox,\mathcal{L})}|+h(\mappings^\abox)\cdot |\mathcal{O}_{(\tbox,\mathcal{L})}|$. 
\item {\sc check-min}: for each $a\in\abox$, co-satisfiability of $A\setminus \{a\}$ must be checked. Thus, there are $|\abox|$ calls to {\sc check-sat}, where the returned value is inverted.
\end{enumerate}
The total complexity of Algorithm~\ref{alg-general} is therefore the sum of the complexities of these three procedures: 

\begin{align*}
&(n^-(\mappings)+h(\mappings^\abox)+|\abox|\cdot n^+(\mappings^\abox))\cdot|\mathcal{O}_{(\tbox,\mathcal{L})}|\\+&(n^+(\mappings^\abox)+|\abox|\cdot h(\mappings^\abox))\cdot |\textup{co-}\mathcal{O}_{(\tbox,\mathcal{L})}|.
\end{align*}

Algortihm~\ref{alg-general} is in fact a generalization of the verifier used in the guess-and-check method for classical ASP: Indeed, consider the case where $\tbox=\emptyset$ and $\mathcal{L}$ is the set of all ground atoms over the language of $\tbox$. In this case, the oracle $\mathcal{O}_{(\tbox,\mathcal{L})}$ must only check membership in $\abox$, hence it is linear in the size of $\abox$. Thus, in this setting a partially ground Skolem mapping program is simply a classical ASP program. Therefore, brave reasoning over mapping programs is at least as hard as classical ASP solving, i.e., is NP-hard \cite{Lifschitz08whatis}.

More generally, for a given reasoning oracle $\mathcal{O}_{(\tbox,\mathcal{L})}$ brave reasoning over mapping programs is NP$^{\mathcal{O}_{(\tbox,\mathcal{L})}}$-complete. 
\begin{theorem}\label{thm-complex}
  Let $(\tbox,\mathcal{L})$ be a pair consisting of an first-order ontology $\tbox$ and a set of formulas $\mathcal{L}$ over the language of $\tbox$ such that $\tbox$-entailment is $|\mathcal{O}_{(\tbox,\mathcal{L})}|$-hard for an oracle $\mathcal{O}_{(\tbox,\mathcal{L})}$. Then for a partially ground Skolemized mapping program $\mappings$ where the head and all justifications are formulas from $\mathcal{L}$, $\tbox$-answer set existence is NP$^{\mathcal{O}_{(\tbox,\mathcal{L})}}$-complete.
\end{theorem}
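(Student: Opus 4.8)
The plan is to prove the two directions separately: membership in NP$^{\mathcal{O}_{(\tbox,\mathcal{L})}}$ and NP$^{\mathcal{O}_{(\tbox,\mathcal{L})}}$-hardness. For membership I would lean directly on Algorithm~\ref{alg-general}. First, nondeterministically guess a candidate mapping interpretation $\abox$; since $\mappings$ is already partially ground over the database constants and existential witnesses do not propagate (per the Remark after the semantics), a guess needs only atoms over the polynomially many ground terms that can actually occur in a $\tbox$-answer set, so each guess has size polynomial in $|\mappings|$. Then run the verifier. The complexity bound derived immediately before the theorem shows the verifier performs only polynomially many invocations of $\mathcal{O}_{(\tbox,\mathcal{L})}$ and $\textup{co-}\mathcal{O}_{(\tbox,\mathcal{L})}$, with all remaining work polynomial-time; as an NP machine with oracle $\mathcal{O}_{(\tbox,\mathcal{L})}$ answers a co-oracle query by negating the oracle's reply, the procedure runs in NP$^{\mathcal{O}_{(\tbox,\mathcal{L})}}$. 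A $\tbox$-answer set exists iff some guess is accepted, which gives membership.

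For hardness I would reduce from an arbitrary language $L\in$~NP$^{\mathcal{O}_{(\tbox,\mathcal{L})}}$, decided by a polynomial-time nondeterministic machine $M$ with access to the entailment oracle. The key observation is that the guess-and-check shape of $\tbox$-answer set semantics mirrors NP$^{\mathcal{O}_{(\tbox,\mathcal{L})}}$ exactly: the guessed interpretation $\abox$ simultaneously plays the role of $M$'s nondeterministic certificate and of the guessed answers to $M$'s oracle queries, while the $\tbox$-entailment tests performed when forming the reduct (Definition~\ref{def-reduct}) play the role of $M$'s oracle calls. Concretely, on input $w$ I would build a mapping program $\mappings_w$ that (i) uses mapping rules realizing ASP choice/even-loop constructs to guess both the computation branch of $M$ and the Boolean answer of each oracle query; (ii) uses a standard ASP encoding of a polynomial-time computation, together with constraint rules of the form $\bot\leftarrow\dots$ as in the extensional-constraint Remark, to ensure that this guess describes an accepting run of $M(w)$ and otherwise destroys the answer set; and (iii) for each oracle query includes mapping rules whose negative (resp.\ positive) justification $\negjustification$ (resp.\ $\posjustification$) is the queried $\mathcal{L}$-formula, so that the reduct step forces the guessed answer to coincide with the true value of $\tbox$-entailment. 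Then $\mappings_w$ has a $\tbox$-answer set iff $M$ has an accepting computation on $w$, i.e.\ iff $w\in L$. The base NP-hardness of the nondeterministic part is already witnessed by the classical-ASP embedding noted before the theorem ($\tbox=\emptyset$, $\mathcal{L}$ all ground atoms), and the $|\mathcal{O}_{(\tbox,\mathcal{L})}|$-hardness of the entailment tests is precisely the stated hardness of $\tbox$-entailment.

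The main obstacle is step (iii): $M$'s oracle queries are adaptive formulas produced on the fly by the computation, whereas the justifications appearing syntactically in a mapping rule are fixed $\mathcal{L}$-formulas. Bridging this gap is where the guess-and-check semantics does the real work: because $\abox$ may guess the answers to all (polynomially many) queries in advance, the simulated run of $M$ becomes deterministic and the sequence of queried formulas is thereby determined, so that verifying a guessed answer reduces to checking whether the corresponding justification is removed in the $\tbox$-reduct, which is one call to $\mathcal{O}_{(\tbox,\mathcal{L})}$. I would handle the materialization of these queries by keeping them within a polynomially bounded family of $\mathcal{L}$-formulas parameterized by the database constants, so that after partial grounding the justifications instantiate to exactly the queries $M$ can pose. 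Showing that this family is genuinely polynomial, and that $\subseteq$-minimality of $\abox$ coincides with correctness of every guessed answer, is the delicate part of the argument; once that alignment is established, correctness of the reduction and hence NP$^{\mathcal{O}_{(\tbox,\mathcal{L})}}$-completeness follow by routine bookkeeping together with the membership direction.
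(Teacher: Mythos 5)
Your proposal takes essentially the same route as the paper: membership in NP$^{\mathcal{O}_{(\tbox,\mathcal{L})}}$ by guessing $\abox$ and running the verifier of Algorithm~\ref{alg-general}, whose complexity analysis (given just before the theorem) bounds the number of oracle and co-oracle calls polynomially, and hardness by encoding an NP$^{\mathcal{O}_{(\tbox,\mathcal{L})}}$ Turing machine as a mapping program in the same manner as the classical ASP encoding of an NP machine, with oracle calls realized as justifications whose entailment is tested during the $\tbox$-reduct construction. Note that the paper's own proof is only a two-line sketch of exactly this encoding, so the issue you rightly flag as delicate --- that the machine's oracle queries are adaptive while a polynomial-size program can only contain a fixed family of justification formulas --- is not resolved in the paper either; your write-up is, if anything, more explicit about where the real work of the hardness direction lies.
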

\begin{proofsketch}
  Intuitively, a NP$^{\mathcal{O}_{(\tbox,\mathcal{L})}}$ Turing machine (that is, an NP Turing machine that allows for $\mathcal{O}_{(\tbox,\mathcal{L})}$-calls on the tape) can be encoded as a mapping program in the same manner as an NP Turing machine can be encoded in classical ASP, however allowing for oracle calls in the mapping rules' bodies. \end{proofsketch}
It is worth noting that, by the preceding theorem, a partially grounded Skolemized mapping program satisfying the conditions of Theorem~\ref{thm-complex} can be rewritten into an ASP program with oracle calls in the rule bodies. The resulting program, however, bears little resemblance to the original mapping program, as it is the encoding of the NP$^{\mathcal{O}_{(\tbox,\mathcal{L})}}$ Turing machine.
\subsection{UCQ-Rewritable Justifications}

We now analyze a restriction of mapping programs that admit a natural reduction to classical ASP for query answering and reasoning. To this end, let $\tbox$ be an ontology over a decidable fragment of first-order logic. We say a formula $\varphi$ over $\Sigma_\tbox$ is \emph{UCQ-rewritable with respect to $\tbox$} if the $\tbox$-rewriting of $\varphi$ is equivalent to a union of conjunctive queries \cite{DiPinto:2013:OQR:2452376.2452441}.

 Then for a mapping program $\mappings$ where all justifications are UCQ-rewritable with respect to $\tbox$, let $\overline \mappings$, called the \emph{$\tbox$-rewritten program}, denote the mapping program obtained from $\mappings$ by replacing every justification with its rewriting with respect to $\tbox$. The $\tbox$-rewritten program $\overline \mappings$ is equivalent to a program containing only atoms as positive justifications and CQs as negative justifications, by well-known logic program equivalence transformations \cite{Lifschitz1999}. By abuse of notation, $\overline \mappings$ will in the following denote this equivalent program.

Let us first establish the connection between mapping programs and $\exists$-ASP. Recall that a mapping rule can be applied to every tuple $\vart\in\eval(\dataquery, \database)$ where  $\tbox\cup\abox\vDash \posjustification_i[\vart]$ for all positive justifications $\posjustification_j$ and  $\tbox\cup\abox\not\vDash \negjustification_j[\vart]$ for all negative justifications $\negjustification_j$. If the TBox $\tbox$ is empty, this statement reduces to checking whether the justifications are certain answers w.r.t. $\abox$ and hence simply checking containment in $\abox$. This is, however, precisely the semantics of ASP with existential variables in the heads and negative bodies of rules. Hence, mapping programs can be seen as an extension of $\exists$-ASP, both semantically and syntactically. This result is summarized in the following theorem.
\begin{theorem}\label{thm-toexist}
 Let $\mappings$ be a partially ground Skolem program where all justifications are conjunctive queries. Then a set $\abox$ is a $\emptyset$-answer set of $\mappings$ iff it is a $\exists$-answer set of $\mappings$. 
\end{theorem}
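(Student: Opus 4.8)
The plan is to show that, under an empty TBox, the two reduct constructions and the two minimality conditions coincide \emph{verbatim}, so that $\emptyset$-answer sets and $\exists$-answer sets are literally the same subsets of $HB_{sk(\mappings)}$. Both frameworks apply the same partial grounding $PG(\mappings)$, so it suffices to compare, rule by rule, the $\tbox$-reduct of Definition~\ref{def-reduct} instantiated with $\tbox=\emptyset$ against the $\exists$-ASP reduct, and then compare the two notions of $\subseteq$-minimal model of the resulting positive program.

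The crux is a single semantic observation: for a conjunctive query $\varphi$ and a mapping interpretation $\abox$ (a consistent set of ground atoms over Skolem terms, which are treated as constants), one has $\emptyset\cup\abox\vDash\varphi$ if and only if $\varphi$ has a match in $\abox$, i.e.\ there is a homomorphism from $\varphi$ into $\abox$. First I would argue this equivalence. The certain-answer semantics used in the mapping framework requires every interpretation $I\vDash\abox$ to satisfy $\varphi$; since $\abox$ is itself the least Herbrand model of the fact set $\abox$, and since conjunctive queries are monotone and preserved under homomorphisms, certain entailment collapses to evaluation over $\abox$ alone. This is exactly the notion of ``entailed by $X$'' used to define the $\exists$-ASP reduct, and it is the reduction already asserted in the paragraph preceding the theorem.

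With this observation in hand, I would verify that the two reduct steps match. For the deletion step, the $\tbox$-reduct removes a rule $m$ whenever $\emptyset\cup\abox\vDash\negjustification_i$ for some $i$, while the $\exists$-ASP reduct removes $m$ whenever some negated conjunction $\naf(C^i_1,\ldots,C^i_{u_i})$, which is precisely the CQ $\negjustification_i$, is entailed by $\abox$; by the crux these conditions coincide. Both constructions then delete all negative justifications, yielding identical positive programs. It remains to check that the two notions of model of this positive program agree: a mapping interpretation satisfies a positive rule iff its head holds in $\abox$ or some positive justification $\posjustification_j[\vart]$ fails for some guard tuple $\vart\in\eval(\dataquery,\database)$; again by the crux each such certain-entailment test reduces to a homomorphism check into $\abox$, matching positive-body satisfaction of the corresponding $\exists$-rule. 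Since both answer-set notions are defined as $\subseteq$-minimal models of the now-identical positive reduct over the same Herbrand base, the minimal models coincide, giving both directions of the ``iff.''

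The main obstacle I anticipate is making the crux fully rigorous in the presence of Skolem terms and of the CQ-internal existential variables of the justifications. One must confirm that treating Skolem function symbols as constants makes $\abox$ a genuine least Herbrand interpretation over which CQ evaluation behaves classically, and that the partial grounding has already instantiated exactly the free variables $\vary_i,\vary'_j\subseteq\varx$ while leaving the CQ-internal existentials to the homomorphism, so that the ``match in $\abox$'' on the $\exists$-ASP side and the certain answer on the mapping side range over the same existential witnesses. Once this alignment of quantifiers is pinned down, the remainder is a direct, definitional comparison.
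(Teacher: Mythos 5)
Your proposal is correct and takes essentially the same approach as the paper: the paper offers no separate formal proof of this theorem, presenting it as a summary of the preceding observation that with an empty ontology, certain entailment of CQ justifications from $\abox$ collapses to containment (i.e., a match) in $\abox$, which is exactly the $\exists$-ASP notion of entailment used in its reduct. Your ``crux'' homomorphism lemma plus the rule-by-rule comparison of the two reducts and of the $\subseteq$-minimality conditions is a rigorous elaboration of precisely that argument, including the right caveat about Skolem terms and the quantifier alignment left implicit in the paper.
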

The following lemma describes the relationship between $\tbox$-rewritten programs and reducts w.r.t. $\abox$, which is particularly useful when analyzing the connection between $\exists$-ASP and mapping programs, as discussed in Theorem~\ref{thm-tbox-exasp}.
\begin{lemma}\label{lemma-m-overlinem}
  For any $\abox\subseteq HB_{sk(\mappings)}$ the equality $\overline \mappings^\abox=\overline{\mappings^\abox}$ holds, where $\overline{\mappings^\abox}$ denotes the $\tbox$-rewritten program of $\mappings^\abox$.
\end{lemma}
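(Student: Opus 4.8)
The plan is to prove the identity by showing that the two operations involved — forming the $\tbox$-reduct with respect to $\abox$, and rewriting all justifications with respect to $\tbox$ — commute. The first thing I would record is that neither operation ever touches the head $\ontquery$ or the source query $\dataquery$ of a rule: by Definition~\ref{def-reduct} the reduct only deletes entire rules or strips negative justifications, while by construction the rewriting only replaces a justification by its UCQ-rewriting. Consequently, to establish $\overline{\mappings}^\abox=\overline{\mappings^\abox}$ it suffices to argue, rule by rule, that (i) exactly the same original rules survive on both sides, and (ii) for each surviving rule the two sides carry exactly the same (rewritten) justifications. Both programs will then consist of the rewritten versions of the same set of rules, with identical heads and source queries.

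The engine of the argument is the defining property of UCQ-rewritability: for any justification $\chi$ and any tuple $\vart$, we have $\tbox\cup\abox\vDash\chi[\vart]$ if and only if $\abox\vDash\overline{\chi}[\vart]$, where on the right $\abox$ is evaluated directly as the database. This is exactly soundness and completeness of the $\tbox$-rewriting, and it is what lets us move the ontology from the entailment check into the query. Using it, I would compare rule deletion: a rule $m$ is deleted in forming $\mappings^\abox$ precisely when $\tbox\cup\abox\vDash\negjustification_i$ for some negative justification $\negjustification_i$, which by the rewriting property holds iff $\abox\vDash\overline{\negjustification_i}$ — and this last condition is exactly the one under which the rewritten rule $\overline{m}$ is deleted in forming $\overline{\mappings}^\abox$, the $\tbox$ having been absorbed into $\overline{\negjustification_i}$. (Since the rewriting is idempotent up to equivalence, it does not matter whether the outer reduct is read as checking $\abox\vDash\overline{\negjustification_i}$ or $\tbox\cup\abox\vDash\overline{\negjustification_i}$; both coincide.) Hence the two programs retain exactly the same set of original rules.

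For a surviving rule the two sides also agree justification-for-justification. In $\overline{\mappings}^\abox$ we first rewrite, obtaining positive justifications $\overline{\posjustification_j}$ and negative justifications $\overline{\negjustification_i}$, and the reduct then strips the negative ones, leaving the $\overline{\posjustification_j}$. In $\overline{\mappings^\abox}$ the reduct is applied first, stripping the $\negjustification_i$ and leaving the positive $\posjustification_j$, which rewriting then turns into $\overline{\posjustification_j}$. The two results coincide, so combined with the deletion argument the two programs are identical.

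I expect the main subtlety to lie in making the rewriting property above interact correctly with the normalization step that turns UCQ justifications into the equivalent program with atomic positive justifications and CQ negative justifications (via the transformations of \cite{Lifschitz1999}). One must check that this normalization preserves the deletion condition — i.e. that a rewritten UCQ negative justification being entailed corresponds to the intended combination of CQ-entailments after splitting — and that it is applied consistently on both sides so that it commutes with the reduct as well; once that bookkeeping is in place, the core equivalence $\tbox\cup\abox\vDash\chi[\vart]\iff\abox\vDash\overline{\chi}[\vart]$ does all the work.
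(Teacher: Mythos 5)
Your proposal is correct and follows essentially the same route as the paper's own proof: the heart of both arguments is the equivalence $\tbox\cup\abox\vDash \negjustification_i \iff \abox\vDash\overline{\negjustification_i}$ (soundness and completeness of the $\tbox$-rewriting), which shows that exactly the same rules are deleted in forming either reduct. Your additional bookkeeping — noting that heads and source queries are untouched, that positive justifications rewrite identically on both sides, and that the normalization of rewritten UCQs must commute as well — only makes explicit what the paper's terser proof leaves implicit.
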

\begin{proof}
  Let $m$ be a mapping rule removed from $\mappings$ in the construction of the $\tbox$-reduct $\mappings^\abox$, i.e., there exists some $i\leq k$ such that $\tbox\cup \abox \vDash \negjustification$. This is equivalent to $\emptyset,\abox\vDash \overline{\negjustification_i}$ where $\overline{\negjustification_i}$ is the $\tbox$-rewriting of $\negjustification_i$. Thus $\overline m$ is removed from $\overline \mappings$ in the construction of the $\tbox$-reduct $\overline \mappings^\abox$. Hence $\overline m\in\overline\mappings^\abox$ iff $m\in\mappings^\abox$ iff $\overline m \in \overline{\mappings^\abox}$.
\end{proof}

\begin{theorem}\label{thm-tbox-exasp}
 Let $\mappings$ be a partially ground Skolem program where all justifications are UCQ-rewritable with respect to an ontology $\tbox$. A set $\abox$ is a $\tbox$-answer set of $\mappings$ iff it is an $\emptyset$-answer set of $\overline\mappings$.
\end{theorem}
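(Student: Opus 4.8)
The plan is to reduce both answer-set notions to $\subseteq$-minimality of models of a positive reduct, and then to show that the two positive programs in question have exactly the same models. By definition, a $\tbox$-answer set of $\mappings$ is a $\subseteq$-minimal model of the positive reduct $\mappings^\abox$ (taken with respect to $\tbox$), while an $\emptyset$-answer set of $\overline\mappings$ is a $\subseteq$-minimal model of the reduct $\overline\mappings^\abox$ (taken with respect to the empty ontology). Lemma~\ref{lemma-m-overlinem} tells us these reducts are related by $\overline\mappings^\abox = \overline{\mappings^\abox}$, so it suffices to compare the positive program $\mappings^\abox$ interpreted over $\tbox$ with its own $\tbox$-rewriting $\overline{\mappings^\abox}$ interpreted over $\emptyset$. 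Since the reduct is positive, no negative justifications survive, so the only rules left to analyze are guarded rules whose bodies consist of positive justifications together with the source query $\dataquery$.

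The key step is the defining property of UCQ-rewritability: for every positive justification $\posjustification_j$, every tuple $\vart$, and every interpretation $\abox'$, one has $\tbox \cup \abox' \vDash \posjustification_j[\vart]$ iff $\abox' \vDash \overline{\posjustification_j}[\vart]$, where $\overline{\posjustification_j}$ is the $\tbox$-rewriting. This is precisely the statement that the rewriting captures the certain answers of the query, and crucially it holds uniformly over all ABoxes. Because the source query $\dataquery$ is evaluated only over $\database$, and hence identically in both programs, this yields that a rule $m \in \mappings^\abox$ has its body satisfied by $\abox'$ with respect to $\tbox$ exactly when the rewritten rule $\overline m \in \overline{\mappings^\abox}$ has its body satisfied by $\abox'$ with respect to $\emptyset$.

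For the heads I would use that the Skolemized head $\ontquery(\varx, sk_\varz(\varx))$ is left untouched by the rewriting, and that head satisfaction is the classical Herbrand notion $\abox' \vDash \ontquery$, consistent with the role of $\abox$ as the explicit set of asserted atoms that a first-order model $\tmodel$ must separately extend to satisfy $\tbox$. Combining the head and body observations, for every $\abox' \subseteq HB_{sk(\mappings)}$ we obtain $\abox' \vDash_\tbox m$ iff $\abox' \vDash_\emptyset \overline m$, so $\mappings^\abox$ and $\overline{\mappings^\abox}$ have precisely the same models. Two positive programs with identical model sets have identical $\subseteq$-minimal models, and since the guessed interpretation $\abox$ is drawn from the same Herbrand base in both cases, $\abox$ is a $\subseteq$-minimal model of $\mappings^\abox$ iff it is a $\subseteq$-minimal model of $\overline\mappings^\abox$, which is the claim.

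The main obstacle I anticipate is making the satisfaction equivalence genuinely uniform in the interpretation rather than only at the guessed $\abox$: minimality requires comparing all proper subsets $\abox' \subsetneq \abox$ as candidate models of the fixed reducts, so the rewriting property must be invoked for arbitrary $\abox'$, not just for $\abox$ itself. A secondary point requiring care is the head/body asymmetry, namely that justifications are interpreted as certain answers under $\tbox$ while heads are interpreted classically; I would state this explicitly so that the collapse of $\tbox$-entailment to classical Herbrand satisfaction on justifications (via the rewriting) is not mistakenly applied to heads, since doing so would falsify the statement. The remaining transformation of the rewritten UCQ justifications into atomic positive justifications and CQ negative justifications is an orthogonal, answer-set-preserving logic-program equivalence and can be cited rather than reproved.
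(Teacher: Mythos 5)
Your proposal is correct and follows essentially the same route as the paper's own proof: it invokes Lemma~\ref{lemma-m-overlinem} to identify $\overline\mappings^\abox$ with $\overline{\mappings^\abox}$, uses the certain-answer property of UCQ-rewriting to equate body satisfaction under $\tbox$ with satisfaction of the rewritten body under $\emptyset$ (heads being untouched), and derives minimality from that equivalence. If anything, you are slightly more careful than the paper, which states the satisfaction equivalence only for the guessed $\abox$ but then applies it to proper subsets $\abox'\subsetneq\abox$ in the minimality step; your insistence that the equivalence hold uniformly over all interpretations is exactly what makes that step legitimate.
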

\begin{proof}
Let $\abox\subseteq HB_{sk(\mappings)}$ and let \[r:\ontquery[\vart,sk_\varz(\vart)]\leftarrow\posjustification_1[\vart],\ldots,\posjustification_k[\vart]\]
 be any rule in $\mappings^\abox$. We shall prove the statement by separately showing (1) the equivalence of rule satisfaction in $\mappings^\abox$ and $\overline\mappings^\abox$ and (2) that $\abox\vDash \mappings^\abox$ is minimal iff $\abox\vDash \overline\mappings^\abox$ is minimal.
\begin{enumerate}
\item \emph{Satisfaction:}\label{enumsat}  
\begin{align*}
    &\abox \vDash r\\
                             \Leftrightarrow &\textup{ if }\abox\vDash \dataquery[\vart] \textup{ and } \abox,\tbox \vDash \posjustification_i[\vart] \textup{ for all } i\leq k \\
                              &\textup{ then } \abox\vDash \ontquery[\vart,sk_\varz(\vart)].\\
   \Leftrightarrow &\textup{ if }\abox\vDash \dataquery[\vart] \textup{ and } \abox,\emptyset \vDash \overline{\posjustification_i}[\vart] \textup{ for all } i\leq k \\
                              &\textup{ then } \abox\vDash \ontquery[\vart,sk_\varz(\vart)].\\
 \Leftrightarrow &\abox\vDash \overline r
  \end{align*}

\item \emph{Minimality:} Assume $\abox'\subsetneq \abox$ is a model of $\mappings^\abox$. By \ref{enumsat}, this is the case if and only if $\abox'\vDash \overline{\mappings^\abox}$. Finally, Lemma~\ref{lemma-m-overlinem} yields the desired result that $\abox'\vDash \overline\mappings^\abox$.
\end{enumerate}
\end{proof}

As a direct consequence of the preceding theorem, the following corollary describes how query answering over an OBDA specification using a UCQ-rewritable mapping program can be reduced to query answering over an equivalent OBDA specification with an empty ontology.

\begin{corollary}\label{cor-equiv}
Let $\obda$ be an OBDA specification, $\overline\mappings$ the $\tbox$-rewritten program of $\mappings$, $q$ a query over $\Sigma_\tbox$, and $\overline q$ its rewriting with respect to $\tbox$. Then
\[
\obda \vDash q[t]\iff (\database, \overline\mappings, \emptyset) \vDash \overline q[t]
\]
\end{corollary}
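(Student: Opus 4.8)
The plan is to peel off the two-layer model-theoretic semantics of generalized OBDA entailment on both sides of the equivalence, reducing each side to a statement about the answer sets of the respective mapping program together with a certain-answer condition, and then to connect the two reformulations via Theorem~\ref{thm-tbox-exasp} and the defining correctness property of the rewriting $\overline q$.

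First I would unfold the left-hand side. By definition $\obda\vDash q[t]$ holds iff every model $(\tmodel,\abox)$ of $\obda$ satisfies $\tmodel\vDash q[t]$, and $(\tmodel,\abox)$ is a model iff $\abox$ is a $\tbox$-answer set of $\mappings$ and $\tmodel\vDash\tbox\cup\abox$. Fixing a $\tbox$-answer set $\abox$ and quantifying over all first-order models $\tmodel$ of $\tbox\cup\abox$, the condition ``$\tmodel\vDash q[t]$ for all such $\tmodel$'' is exactly the certain-answer condition $\tbox\cup\abox\vDash q[t]$. Hence the left-hand side is equivalent to: $\tbox\cup\abox\vDash q[t]$ for every $\tbox$-answer set $\abox$ of $\mappings$. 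An identical unfolding of the right-hand side, now with empty ontology so that the first model condition becomes $\tmodel\vDash\abox$, shows that $(\database,\overline\mappings,\emptyset)\vDash\overline q[t]$ is equivalent to: $\abox\vDash\overline q[t]$ for every $\emptyset$-answer set $\abox$ of $\overline\mappings$, where ``$\abox\vDash\overline q[t]$'' abbreviates the certain-answer condition over the empty ontology, which for the positive query $\overline q$ coincides with evaluating $\overline q$ over $\abox$ as a database.

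It then remains to match the two reformulations term by term. Theorem~\ref{thm-tbox-exasp} guarantees that the $\tbox$-answer sets of $\mappings$ are precisely the $\emptyset$-answer sets of $\overline\mappings$, so the outer universal quantifiers on both sides range over exactly the same family of interpretations $\abox$. For each such $\abox$, the defining correctness property of the rewriting $\overline q$ of $q$ with respect to $\tbox$ — that the certain answers of $q$ over $(\tbox,\abox)$ equal the answers of $\overline q$ evaluated over $\abox$ — yields the per-$\abox$ equivalence $\tbox\cup\abox\vDash q[t]$ iff $\abox\vDash\overline q[t]$. Combining this equivalence inside the common universal quantifier over answer sets closes the argument.

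The main obstacle I anticipate is not any single computation but the careful bookkeeping at the interface between the two semantic layers. I must justify that quantifying universally over first-order models $\tmodel$ of $\tbox\cup\abox$ genuinely collapses to the certain-answer relation $\tbox\cup\abox\vDash q[t]$, and symmetrically over the empty ontology, and that the answer-set interpretations $\abox$ — which may contain Skolem constants introduced by $sk(\mappings)$ — are legitimate inputs to the rewriting property. The latter holds because the rewriting property is a purely first-order entailment statement, hence insensitive to whether the domain elements recorded in $\abox$ are database constants or Skolem terms, and because the answers $t$ we care about are tuples over $\Sigma_\tbox$. Once this interface is handled cleanly, Theorem~\ref{thm-tbox-exasp} and the rewriting correctness of $\overline q$ do all of the remaining work.
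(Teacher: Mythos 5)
Your proposal is correct and follows essentially the same route as the paper, which states the corollary without a written proof as a ``direct consequence'' of Theorem~\ref{thm-tbox-exasp}: identify the $\tbox$-answer sets of $\mappings$ with the $\emptyset$-answer sets of $\overline\mappings$ via that theorem, then apply the defining correctness property of the rewriting $\overline q$ pointwise to each answer set. Your explicit unfolding of the two-layer semantics into certain-answer conditions, and your remark about Skolem terms being harmless for the rewriting property, simply make precise the bookkeeping the paper leaves implicit.
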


Therefore, by Corollary~\ref{cor-equiv} and Theorem~\ref{thm-toexist} we find that every UCQ-rewritable mapping program $\mappings$ is equivalent (w.r.t. answer sets) to an $\exists$-ASP program. By results in \cite{DBLP:conf/ijcai/GarreauGLS15}, this can be further reduced to a classical ASP program. This is summarized in the following theorem.

\begin{theorem}
  For an OBDA specification $\obda$, where the justifications in $\mappings$ are UCQ-rewritable with respect to $\tbox$, there exists an ASP program $\mappings'$ such that for a query $q$ over $\tbox$ 
\[
\obda \vDash q[t] \iff \mappings' \vDash \overline q[t],
\]
i.e., query answering over $\obda$ reduces to cautious reasoning over $\mappings'$.
\end{theorem}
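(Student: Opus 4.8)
The plan is to chain together the three reductions already in place, keeping track of how cautious entailment of the query is preserved at each step. I would begin by peeling off the ontology. By Corollary~\ref{cor-equiv}, $\obda \vDash q[t]$ holds iff $(\database,\overline\mappings,\emptyset) \vDash \overline q[t]$, where $\overline\mappings$ is the $\tbox$-rewritten program (whose positive justifications are atoms and whose negative justifications are CQs) and $\overline q$ is the UCQ-rewriting of $q$. This replaces the nonempty ontology and the original query by an equivalent empty-ontology specification with a purely first-order, monotone (UCQ) query, which is the key structural gain that makes the remaining steps possible.

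Next I would convert this empty-ontology entailment into a statement about answer sets. Since $\tbox=\emptyset$, a model of $(\database,\overline\mappings,\emptyset)$ is a pair $(\tmodel,\abox)$ with $\abox$ an $\emptyset$-answer set of $\overline\mappings$ and $\tmodel\vDash\abox$. Because $\overline q$ is a UCQ and hence monotone, its certain answers over all first-order models extending a fixed $\abox$ coincide with its answers in the minimal Herbrand model $\abox$ itself; thus $(\database,\overline\mappings,\emptyset)\vDash \overline q[t]$ iff every $\emptyset$-answer set $\abox$ of $\overline\mappings$ satisfies $\overline q[t]$. As all justifications of $\overline\mappings$ are CQs, Theorem~\ref{thm-toexist} lets me pass from $\emptyset$-answer sets to $\exists$-answer sets, so the condition becomes ``$X\vDash \overline q[t]$ for every $\exists$-answer set $X$ of $\overline\mappings$.''

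Finally I would invoke the $\exists$-ASP-to-ASP reduction of \cite{DBLP:conf/ijcai/GarreauGLS15} to obtain the classical program $\mappings'$, and argue that cautious entailment of $\overline q[t]$ transfers across it. That reduction introduces fresh auxiliary predicates $R$, disjoint from $\Sigma_\tbox$, and yields a correspondence ``up to $R$-atoms'': every $\exists$-answer set $X$ of $\overline\mappings$ equals $Y\setminus\{R\text{-atoms}\}$ for some classical answer set $Y$ of $\mappings'$, and every classical answer set $Y$ projects (by deleting its $R$-atoms) to an $\exists$-answer set. Since $\overline q$ is a formula over $\Sigma_\tbox$ while the $R$-atoms lie over a disjoint signature and add no new elements to the Herbrand universe, satisfaction of $\overline q[t]$ is invariant under adding or removing $R$-atoms, so $X\vDash \overline q[t]$ iff $Y\vDash \overline q[t]$ for any corresponding pair $(X,Y)$. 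Applying both directions of the correspondence then gives ``every $\exists$-answer set of $\overline\mappings$ satisfies $\overline q[t]$'' iff ``every answer set of $\mappings'$ satisfies $\overline q[t]$,'' i.e.\ $\mappings'\vDash \overline q[t]$ under cautious semantics, closing the chain.

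The main obstacle I anticipate is precisely this last transfer. Cautious reasoning is sensitive to the \emph{entire} collection of answer sets, not merely to their existence, and a single $\exists$-answer set may correspond to several classical answer sets differing in their $R$-atoms (and vice versa); so I must verify that projecting away the $R$-atoms is surjective in both directions, ensuring no answer set on either side is missed. The signature-disjointness of $R$ is the technical lever that keeps this clean, but I would also state explicitly that $\overline q$ mentions no $R$-predicate and that the Skolem terms on the mapping side are treated uniformly as constants on both sides, so that the matching of existential query variables against Skolem witnesses is consistent throughout; otherwise the invariance of $\overline q$ under $R$-atoms, and hence the equivalence of the two cautious-entailment conditions, could fail.
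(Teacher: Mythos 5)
Your proposal is correct and follows essentially the same route as the paper, which obtains the result by chaining Corollary~\ref{cor-equiv} (eliminating the ontology via UCQ-rewriting), Theorem~\ref{thm-toexist} (identifying $\emptyset$-answer sets with $\exists$-answer sets), and the $\exists$-ASP-to-ASP reduction of Garreau et al. Your additional care about the transfer of cautious entailment across the auxiliary-predicate correspondence (and the monotonicity of the rewritten UCQ) fills in details the paper leaves implicit, but it is the same argument.
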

\section{Conclusion and Future Work}
In this paper, we propose a new mapping framework for ontology-based data access (and data transformation in general) that greatly enhances the mappings' expressivity. Our framework allows for default reasoning over the database and ontology, as well as the expression of various epistemic properties of the database, such as extensional constraints and closed predicates. We have shown that in the case where the rule body is UCQ-rewritable, this framework can be rewritten to an equivalent $\exists$-ASP program, and hence query answering reduces to cautious reasoning over ASP.

While various highly optimized ASP solvers do exist, the data complexity involved is rather undesirable in the context of real-world OBDA and big data. Therefore, one of the greatest priorities regarding future work is to determine how and when the complexity can be reduced; the mapping program should not be run on the entire data set. This could, for instance, be addressed by splitting the program into two parts, an easily solvable and a more difficult subprogram, and caching which ontology concepts are easily unfolded.

In addition to such considerations, a prototype should be implemented to demonstrate the feasibility in real contexts, and compare our framework to existing approaches.

\paragraph{Acknowledgments}{We would like to thank the anonymous referees for their very insightful comments and suggestions.}
 \bibliographystyle{aaai}
 \bibliography{bibliography}

\begin{thebibliography}{}

\bibitem[\protect\citeauthoryear{Alviano \bgroup et al\mbox.\egroup
  }{2011}]{Alviano11functionsymbols}
Alviano, M.; Calimeri, F.; Faber, W.; Ianni, G.; and Leone, N.
\newblock 2011.
\newblock Function symbols in {ASP}: Overview and perspectives.
\newblock In {\em In NMR – Essays Celebrating Its 30th Anniversary},  1--24.
\newblock College Publications.

\bibitem[\protect\citeauthoryear{Antonioli \bgroup et al\mbox.\egroup
  }{2014}]{cbb1fbd58bef414bb51064327f881fc2}
Antonioli, N.; Castanò, F.; Coletta, S.; Grossi, S.; Lembo, D.; Lenzerini, M.;
  Poggi, A.; Virardi, E.; and Castracane, P.
\newblock 2014.
\newblock Developing ontology-based data management for the italian public
  debt.
\newblock In {\em 22nd Italian Symposium on Advanced Database Systems, SEBD
  2014},  353--360.
\newblock Universita Reggio Calabria and Centro di Competenza (ICT-SUD).

\bibitem[\protect\citeauthoryear{Cal{\`{\i}}, Gottlob, and
  Kifer}{2013}]{DBLP:journals/jair/CaliGK13}
Cal{\`{\i}}, A.; Gottlob, G.; and Kifer, M.
\newblock 2013.
\newblock Taming the infinite chase: Query answering under expressive
  relational constraints.
\newblock {\em J. Artif. Intell. Res. {(JAIR)}} 48:115--174.

\bibitem[\protect\citeauthoryear{Cal{\`{\i}}, Gottlob, and
  Lukasiewicz}{2012}]{DBLP:journals/ws/CaliGL12}
Cal{\`{\i}}, A.; Gottlob, G.; and Lukasiewicz, T.
\newblock 2012.
\newblock A general datalog-based framework for tractable query answering over
  ontologies.
\newblock {\em J. Web Sem.} 14:57--83.

\bibitem[\protect\citeauthoryear{Calvanese \bgroup et al\mbox.\egroup
  }{2007}]{DBLP:journals/jar/CalvaneseGLLR07}
Calvanese, D.; {De Giacomo}, G.; Lembo, D.; Lenzerini, M.; and Rosati, R.
\newblock 2007.
\newblock Tractable reasoning and efficient query answering in description
  logics: The \emph{DL-Lite} family.
\newblock {\em J. Autom. Reasoning} 39(3):385--429.

\bibitem[\protect\citeauthoryear{Di~Pinto \bgroup et al\mbox.\egroup
  }{2013}]{DiPinto:2013:OQR:2452376.2452441}
Di~Pinto, F.; Lembo, D.; Lenzerini, M.; Mancini, R.; Poggi, A.; Rosati, R.;
  Ruzzi, M.; and Savo, D.~F.
\newblock 2013.
\newblock Optimizing query rewriting in ontology-based data access.
\newblock In {\em Proceedings of the 16th International Conference on Extending
  Database Technology}, EDBT '13,  561--572.
\newblock New York, NY, USA: ACM.

\bibitem[\protect\citeauthoryear{Donini, Nardi, and
  Rosati}{2002}]{Donini:2002:DLM:505372.505373}
Donini, F.~M.; Nardi, D.; and Rosati, R.
\newblock 2002.
\newblock Description logics of minimal knowledge and negation as failure.
\newblock {\em ACM Trans. Comput. Logic} 3(2):177--225.

\bibitem[\protect\citeauthoryear{Eiter \bgroup et al\mbox.\egroup
  }{2008}]{Eiter20081495}
Eiter, T.; Ianni, G.; Lukasiewicz, T.; Schindlauer, R.; and Tompits, H.
\newblock 2008.
\newblock Combining answer set programming with description logics for the
  semantic web.
\newblock {\em Artificial Intelligence} 172(12–13):1495 -- 1539.

\bibitem[\protect\citeauthoryear{Garreau \bgroup et al\mbox.\egroup
  }{2015}]{DBLP:conf/ijcai/GarreauGLS15}
Garreau, F.; Garcia, L.; Lef{\`{e}}vre, C.; and St{\'{e}}phan, I.
\newblock 2015.
\newblock {\(\exists\)}-{ASP}.
\newblock In {\em Proceedings of the Joint Ontology Workshops 2015 Episode 1:
  The Argentine Winter of Ontology co-located with the 24th International Joint
  Conference on Artificial Intelligence {(IJCAI} 2015), Buenos Aires,
  Argentina, July 25-27, 2015.}

\bibitem[\protect\citeauthoryear{Gebser, Kaufmann, and
  Schaub}{2012}]{Gebser:2012:CAS:2228640.2228952}
Gebser, M.; Kaufmann, B.; and Schaub, T.
\newblock 2012.
\newblock Conflict-driven answer set solving: From theory to practice.
\newblock {\em Artif. Intell.} 187-188:52--89.

\bibitem[\protect\citeauthoryear{Gelfond and
  Lifschitz}{1988}]{gelfond-lifschitz88}
Gelfond, M., and Lifschitz, V.
\newblock 1988.
\newblock The stable model semantics for logic programming.
\newblock  1070--1080.
\newblock MIT Press.

\bibitem[\protect\citeauthoryear{Gomes \bgroup et al\mbox.\egroup
  }{2008}]{gomes2008satisfiability}
Gomes, C.~P.; Kautz, H.; Sabharwal, A.; and Selman, B.
\newblock 2008.
\newblock Satisfiability solvers.
\newblock {\em Foundations of Artificial Intelligence} 3:89--134.

\bibitem[\protect\citeauthoryear{Hovland \bgroup et al\mbox.\egroup
  }{2015}]{iswc2015tuning}
Hovland, D.; Lanti, D.; Rezk, M.; and Xiao, G.
\newblock 2015.
\newblock Enabling {SPARQL} queries over enterprise relational data (extended
  version).
\newblock {\em preprint}.
\newblock arXiv:1605.04263v2 [cs.DB].

\bibitem[\protect\citeauthoryear{Lembo \bgroup et al\mbox.\egroup
  }{2015}]{mappings2}
Lembo, D.; Mora, J.; Rosati, R.; Savo, D.~F.; and Thorstensen, E.
\newblock 2015.
\newblock Mapping analysis in ontology-based data access: Algorithms and
  complexity.
\newblock In {\em The Semantic Web - {ISWC} 2015 - 14th International Semantic
  Web Conference, Bethlehem, PA, USA, October 11-15, 2015, Proceedings, Part
  {I}},  217--234.

\bibitem[\protect\citeauthoryear{Lembo \bgroup et al\mbox.\egroup
  }{2016}]{DBLP:conf/dlog/LemboRSST16}
Lembo, D.; Rosati, R.; Santarelli, V.; Savo, D.~F.; and Thorstensen, E.
\newblock 2016.
\newblock Approaching {OBDA} evolution through mapping repair.
\newblock In {\em Proceedings of the 29th International Workshop on Description
  Logics, Cape Town, South Africa, April 22-25, 2016.}

\bibitem[\protect\citeauthoryear{Lifschitz, Tang, and
  Turner}{1999}]{Lifschitz1999}
Lifschitz, V.; Tang, L.~R.; and Turner, H.
\newblock 1999.
\newblock Nested expressions in logic programs.
\newblock {\em Annals of Mathematics and Artificial Intelligence}
  25(3):369--389.

\bibitem[\protect\citeauthoryear{Lifschitz}{2008}]{Lifschitz08whatis}
Lifschitz, V.
\newblock 2008.
\newblock What is answer set programming?

\bibitem[\protect\citeauthoryear{Lin and Zhao}{2004}]{Lin2004115}
Lin, F., and Zhao, Y.
\newblock 2004.
\newblock {ASSAT}: computing answer sets of a logic program by {SAT} solvers.
\newblock {\em Artificial Intelligence} 157(1–2):115 -- 137.
\newblock Nonmonotonic Reasoning.

\bibitem[\protect\citeauthoryear{Lutz, Seylan, and
  Wolter}{2013}]{Lutz:2013:ODA:2540128.2540276}
Lutz, C.; Seylan, I.; and Wolter, F.
\newblock 2013.
\newblock Ontology-based data access with closed predicates is inherently
  intractable (sometimes).
\newblock In {\em Proceedings of the Twenty-Third International Joint
  Conference on Artificial Intelligence}, IJCAI '13,  1024--1030.
\newblock AAAI Press.

\bibitem[\protect\citeauthoryear{Motik and
  Rosati}{2010}]{Motik:2010:RDL:1754399.1754403}
Motik, B., and Rosati, R.
\newblock 2010.
\newblock Reconciling description logics and rules.
\newblock {\em J. ACM} 57(5):30:1--30:62.

\bibitem[\protect\citeauthoryear{Poggi \bgroup et al\mbox.\egroup
  }{2008}]{poggi2008}
Poggi, A.; Lembo, D.; Calvanese, D.; De~Giacomo, G.; Lenzerini, M.; and Rosati,
  R.
\newblock 2008.
\newblock Linking data to ontologies.
\newblock Berlin, Heidelberg: Springer-Verlag.
\newblock  133--173.

\bibitem[\protect\citeauthoryear{Rosati}{2012}]{raey}
Rosati, R.
\newblock 2012.
\newblock Prexto: Query rewriting under extensional constraints in
  dl − lite.
\newblock In Simperl, E.; Cimiano, P.; Polleres, A.; Corcho, O.; and Presutti,
  V., eds., {\em The Semantic Web: Research and Applications}, volume 7295 of
  {\em Lecture Notes in Computer Science}. Springer Berlin Heidelberg.
\newblock  360--374.

\end{thebibliography}
  
\end{document}